\documentclass[]{article}

\usepackage{arxiv}

\usepackage{mathtools}  % Loads and improves amsmath
\usepackage{amssymb}    % Loads amsfonts automatically
\usepackage{amsthm}     % For arxiv

\newtheorem{theorem}{Theorem}

\newtheorem{definition}{Definition}

\usepackage{mathtools}  % Loads and improves amsmath
\usepackage{amssymb}    % Loads amsfonts automatically
\usepackage{dsfont}     % For \mathds{R} etc.

\usepackage[T1]{fontenc}
\usepackage{xspace}
\usepackage{float}      % Allows [H] placement for figures/tables
\usepackage{hyperref}

\usepackage{graphicx}
\usepackage{adjustbox}
\usepackage{booktabs}   % For \toprule, \hline, \bottomrule
\usepackage{multirow}

\usepackage{algorithm}   % The "wrapper" (provides the floating environment)
\usepackage{algorithmic} % The "content" (provides the \STATE, \IF, etc.)

\usepackage{xcolor}

\newcommand{\ignore}[1]{}

\DeclarePairedDelimiter\floor{\lfloor}{\rfloor}

\newcommand{\gsemo}[1]{GSEMO\xspace}

\begin{document}

\author{
Liam Wigney\\
Optimisation and Logistics\\
School of Computer Science \\ and Information Technology\\
Adelaide University\\
Adelaide, Australia
\And
Frank Neumann\\
Optimisation and Logistics\\
School of Computer Science \\ and Information Technology\\
Adelaide University\\
Adelaide, Australia
}

\title{Multitask Pareto Optimization for Monotone Submodular Problems with Dynamic Constraints}
\maketitle

\begin{abstract}
Evolutionary multitasking is a recent approach that solves multiple related optimization problems within a single evolutionary run, rather than addressing each problem separately. We consider monotone submodular optimization problems with dynamic knapsack constraints and study a multitasking formulation in which all tasks share a common monotone submodular function $f$, but differ in their constraints.
We focus on the case where elements within each constraint have uniform cost and show that this structure leads to small Pareto fronts in the multitasking formulation. This enables solution sharing across tasks and can improve performance compared to running standard evolutionary approaches independently, depending on the constraint regime.
Using rigorous runtime analysis, we analyze the expected time until the proposed multitasking algorithms obtain a $(1 - 1/e)$-approximation for each task. Experimental results for the Maximum Coverage problem complement the theoretical analysis and provide further insight into the practical behavior of the approach across different budget settings.
\end{abstract}

\section{Introduction}

Evolutionary multitasking is an approach that allows multiple related problems to be solved using a single population of an evolutionary algorithm, rather than separately with their own populations as done traditionally ~\cite{DBLP:journals/cogcom/OsabaSMH22}. It has been shown to effectively solve problems in many domains, such as combinatorial multi-objective optimization, optimization as a service and genetic programming ~\cite{DBLP:conf/soict/Ong15,Tan2023,DBLP:conf/cec/DaGOF16}.

Real-world problems that can be modeled using diminishing returns and theoretical problems such as max coverage ~\cite{DBLP:journals/ipl/KhullerMN99} and max cut ~\cite{DBLP:journals/jacm/GoemansW95} are examples of combinatorial optimization problems called submodular problems. These functions can be seen as the discrete version of the continuous concept of convexity. Constrained variations of these problems have been shown to be effectively solved using evolutionary algorithms ~\cite{DBLP:books/cu/p/0001G14,DBLP:journals/mor/NemhauserW78}. It is NP-hard to maximize these constrained problems, even for trivial constraints on the number of items.

For simple EAs such as $(1+1)$-EA, these problems typically get trapped in local optima, although archive-based methods have shown improvements ~\cite{DBLP:conf/ppsn/NeumannR24}. These problems can be formulated in a multi-objective way, which allows for them to be solved using GSEMO (global simple evolutionary multi-objective optimizer), a multi-objective evolutionary optimization algorithm.

In practice it is common that these constraints are dynamic, often they increase and decrease with time. For example, logistical limitations might mean that during the transport of goods, the number of available trucks fluctuates over the day. Optimization algorithms then need to be able to ensure that they can respond effectively to these changes to make sure that the optimal goods are still prioritized ~\cite{DBLP:journals/ai/RoostapourNNF22,DBLP:conf/ijcai/0001Q0021}.

We consider a multitasking formulation of GSEMO in which the knapsack bounds may vary dynamically over time. An alternative perspective is to view each change in the bound as inducing a new static problem that is initialized with the population obtained for the previous bound. This perspective makes it possible to examine how Pareto-based selection in GSEMO retains, discards, and transfers solutions as the constraint changes over time.

We investigate this setting using a combination of rigorous runtime analysis and extensive experimental studies with statistical evaluation. The runtime analysis provides insight into the asymptotic performance of the algorithm, while the experiments are used to explore how multitasking behaves in practice. Further details on the theoretical methodology can be found in \cite{DBLP:series/ncs/2020DN,DBLP:conf/gecco/Witt14a}.

In this work, we study evolutionary multitasking for monotone submodular optimization with dynamic knapsack constraints, with particular emphasis on the induced Pareto-front structure and the resulting solution-sharing mechanisms observed in practice.

\subsection{Related Work}

Most existing work on evolutionary multitasking has focused on continuous optimization problems \cite{DBLP:journals/tcyb/LinLTG21,DBLP:journals/tcyb/LinWMGLC24,SUN2023504,DBLP:journals/isci/HuLSM22}. These approaches often rely on sophisticated algorithms, including neural networks and deep learning. While they achieve strong performance, their complexity typically makes a detailed analysis of the underlying dynamics difficult, and a clear understanding of the mechanisms behind them is often lacking. In comparison, evolutionary multitasking for combinatorial optimization problems is less well studied. Existing work has mainly focused on classical problems such as the traveling salesperson problem or variants of the knapsack problem \cite{DBLP:conf/gecco/WigneyNON25,DBLP:conf/gecco/Don0025,7848632}.

Theoretical results on evolutionary multitasking are also limited. Most existing analyses consider knowledge transfer mechanisms \cite{DBLP:conf/cec/ScottJ24,DBLP:conf/foga/ScottJ23} or benchmark problems \cite{DBLP:conf/gecco/Lengler0N25}. Until recently, there have been no evolutionary multitasking studies explicitly considering monotone submodular objective functions \cite{DBLP:conf/gecco/ANNON}.  As a result, the literature has so far focused primarily on performance improvements, with less emphasis on understanding how multitasking fundamentally alters the search dynamics when compared to classical single-task evolutionary algorithms.

Dynamic evolutionary multitasking has received some attention in recent years. Evolutionary multitasking for dynamic scheduling was studied in \cite{DBLP:journals/tec/ZhangMNZT21}, and multitasking under dynamic constraints has been considered in \cite{DBLP:journals/tec/WangXJT25,DBLP:journals/tec/QiaoYQ0SYLT23}. However, these works do not explicitly consider submodular or monotone objective functions. In particular, evolutionary multitasking for monotone submodular optimization under dynamic knapsack constraints has not yet been studied.

Previous work on runtime analysis for GSEMO/POMC (Pareto Optimization for maximizing a Monotone function with a monotone Cost constraint) and simple single-objective evolutionary algorithms such as $(1+\lambda)$-EA and $(1+1)$-EA have typically been done by considering single unweighted uniform constraints and general cost constraints ~\cite{DBLP:journals/ai/RoostapourNNF22,DBLP:conf/ppsn/NeumannR24}. There is also a growing body of work that considers matroid constraints but only in the static case ~\cite{DBLP:journals/ec/FriedrichN15,DBLP:conf/ppsn/DoN20}. Works that include arbitrary weights often consider chance constraints rather than deterministic uniform constraints ~\cite{NeumannNeumannTCS23,DBLP:conf/ppsn/YanNN24}. The more complex constraint types often require sophisticated methods that are not needed in the simple uniform constraint cases; however, they all give reasonable upper bounds on the runtime to reach a $\left(1-\frac{1}{e}\right)$-approximation.

Traditionally, these constraints are considered to be static throughout the runtime of the algorithm, however literature around dynamic constraints is sparse. Dynamic bounds with approximated costs and approximately submodular functions were analyzed for GSEMO ~\cite{DBLP:journals/ai/RoostapourNNF22}. Works improving GSEMO have been considered for dynamic cost constraints ~\cite{DBLP:conf/ijcai/0001Q0021}, with a focus on more efficient approximation guarantees via the new algorithm FPOMC. There also exists non-evolutionary literature on dynamic constraints ~\cite{DBLP:conf/stoc/ChenP22} but again none focus on any form of evolutionary multitasking.

Most of these algorithms are multi-objective Pareto optimization based approaches. They work by building up a Pareto front for the problem, which ensures that optimal solutions remain in the population while suboptimal solutions do not get included.
Due to this, the upper bound on population size is a significant factor that determines the runtime bounds as noted in previous papers ~\cite{DBLP:conf/ppsn/NeumannR24,DBLP:conf/ppsn/YanNN24}, as it controls the number of points and thus gives the upper bound on the Pareto front. As the Pareto front grows exponentially large as the number of trade-offs increases, for a multitasking algorithm the problems need to be correlated to achieve a benefit compared to in the classical approach where they are done separately ~\cite{Tan2023}.

\subsection{Our Contribution}

We contribute to the fundamental understanding of how multitasking evolutionary algorithms work when applied to monotone submodular problems with dynamic constraints both theoretically and experimentally. Specifically, we provide runtime bounds for GSEMO on these problems. For each problem set we find the population size and use it to derive exact upper bounds on the runtime. We prove this for the most general case with $k$ problems and then derive interesting subsets of this general case. It is because of the correlation between the constraints, as well as the nature of the problems, that we can efficiently do so.

The analysis demonstrates the multitasking approach can have a superior runtime in some scenarios compared to the classical approach by a factor of $k$. This is because the Pareto approach means that we can share the population and by finding the largest of the problems, we get all the lower order approximations for free. As the primary driver for the bounds is the size of the population, problems that are subsets of the most general problem, by sharing a bound for example, are trivial to derive. We showed that there is a relationship between the problem similarity and the possible advantage for the upper bound of the multitasking approach vs the classical approach.

The experimental analysis was done applying the NP-hard Maximum Coverage problem to multiple social graphs with different constraints and dynamic schedules. We consider multiple problems and maximize them all classically as well as using our multitasking regime. We considered the case where the constraints are dynamically being pushed to a higher bound as well as where the dynamic changes are being uniformly sampled. We observe from this the benefits and drawbacks of the multitasking approach in its ability to dynamically adapt to changes in constraints and gain insight into the mechanics behind the algorithm in practice. We saw that when the problems are similar, MT-GSEMO provides an advantage. But, as the similarity decreases, the advantage turns to a disadvantage.

This indicates that in practice, MT-GSEMO's effectiveness for a given dynamic constrained problem is dependent on the similarity between the largest dynamic constrained problem and the one being considered which scales and aligns with the results in the theory section.

The paper is structured as follows:
Section 2 outlines the preliminary definitions and notation. Section 3 gives a rigorous runtime analysis, where we begin with the most general $k$ constraint case and discuss the applicability to other scenarios and conclude with comparisons between our evolutionary multitasking approach and the classical approach. Section 4 contains experimental results and analysis.

\section{Preliminaries}

We call a function $f: \{0,1 \}^n \to \mathbb{R^+}$ the objective function, specifically formulated in terms of bit strings.
\begin{definition}
    This function is considered submodular if $\forall \ x,y \in \{0,1\}^n$ where $x \leq y$ and  $\forall i$ where $x_i=y_i=0$ the following holds
    \begin{equation*}
        f(x \oplus e_i)-f(x) \geq f(y \oplus e_i) - f(y)
    \end{equation*}
    where setting the $i$th bit of $z$ to $1$ is denoted by $z \oplus e_i$ and where $x \le y$ for $x,y \in \{0,1\}^n \implies x_i \le y_i \ \forall i \in \{1,\dots,n\}$.
\end{definition}

These problems are monotone if $f(x) \leq f(y)$ holds when $x \leq y$ for all $x,y \in \{0,1\}^n$. These functions can be seen as the discrete version of the continuous concept of convexity.

\begin{definition}
    Let $x^i \in \{0,1\}^n$ be a solution containing $i$ selected elements.
    The largest marginal gain obtainable by adding one further element to
    $x^i$ is
    \begin{equation*}
        \delta_{i+1}
        =
        \max_{\substack{j \in \{1,\dots,n\}\\ (x^i)_j=0}}
        \left(
            f(x^i \oplus e_j)-f(x^i)
        \right).
    \end{equation*}
\end{definition}

We consider uniform cost constraints $c: \{0,1\}^n \to \mathbb{R}$ that count the number of $1$-bits in a solution, $c(x) = \sum_{i=1}^n x_i = |x|_1$. It is called a linear weighted uniform constraint if it has a weight applied to this count, $c(x) = a \cdot \sum_{i=1}^n x_i = a \cdot |x|_1$ where $a \in \mathbb{R}^+$. This weight $a$ can be normalized into the bound $B \in \mathbb{R}$ such that $c^*(x) = |x|_1$ and $B' = B/a$ if needed. We keep them non-normalized to keep the results more practically useful and simplify further extensions looking into different constraints and costs. We will only be considering uniform cost constraints and for simplicity use the term constraint to refer to the cost constraint function and bound together.

\begin{definition}
    (General problem of a single objective with one constraint). Given a monotone submodular objective function $f$, cost constraint function $c$ and budget $B$,
    \begin{equation*}
                        \underset{x \in \{0,1\}^n}{\max} f(x) \text{ such that } c(x) \leq B
    \end{equation*}
\end{definition}

Even with these trivial constraints, solving these problems is NP-hard and is classically done by solving independently each problem. Let $x^{OPT}_i$ be an optimal feasible solution for problem $(f, c_i, B_i)$. Our goal is to compute for each $i \in \{1, \ldots, k\}$ a solution $x$ with $f(x) \geq \alpha \ f(x^{OPT}_i)$ and $c_i(x) \leq B_i$. The multitasking formulation however considers solving these problems simultaneously.

\begin{definition}
    (General problem of a single objective with multiple constraints). Given a monotone submodular objective function $f$, $k$ linear weighted uniform cost constraint functions $c_i(x) = a_i \cdot |x|_1$ with weights $a_i \in \mathbb{R}^+$ and $k$ budgets $B_i$, the goal is to find a set of $k$ solutions $\{x_1, \dots, x_k\}$ such that for each $i \in \{1, \dots, k\}$:
    \begin{equation*}
       x_i \in \operatorname*{arg\,max}_{x \in \{0,1\}^n:\,c_i(x)\le B_i} f(x).
    \end{equation*}
\end{definition}

These are represented as the problems defined by the triplet $(f, c_i, B_i)$ where $i \in \{1, \dots, k\}$ indexes each problem.

\begin{definition}
    The general multi-objective objective function GSEMO attempts to solve is
    \begin{equation*}
        g(x) = (g_1(x), g_2(x), \dots, g_{k+1}(x))
    \end{equation*}
    Where $g_1(x)$ is the primary submodular monotone function and $\{g_2(x), \dots, g_{k+1}(x)\}$ is the set of negative cost functions to be maximized.
\end{definition}

\begin{definition}
\label{pri:multitask}
        The most general multitasking primary objective function shared by each $i \in \{1, \dots, k\}$ problems is defined as:
        \begin{equation*}
                g_1(x) =
                \begin{cases}
                        f(x), & \text{if }\ \exists i \in \{1, \ldots, k\} \text{ with} \ c_i(x) \leq B_i\\
                        -1,   & \text{else}
                \end{cases}
        \end{equation*}
\end{definition}

In our multitasking formulation, we consider multiple problems where for $k$ problems we have $k$ secondary functions defined as the negative of the individual cost functions, such that $g_{i+1}(x) = -c_i(x)$ for $i \in \{1, \dots, k\}$. This means that overall we have $k+1$ different objective functions, where the primary objective function, the submodular monotone function, is fully shared between the problems and only needs to be evaluated once per generation. Notably, this ensures that there is no negative transfer as is possible under different multitasking formulations.

Maximum Coverage is defined on an undirected graph $G = (V,E)$ where $n=|V|$. Each node $v$ has a cost of $1$ and $N(v)$ is the set of nodes containing both $v$ and its neighbors. Let $x$ be the set of nodes selected, then the coverage of this set is
\begin{equation}
\label{eqn:coverage}
    \text{Coverage}(x) = \left| \bigcup_{i: x_i=1} N(v_i) \right|
\end{equation}

By using a multi-objective algorithm, we can set the secondary objectives to maximize negative the number of $1$'s considering the weights of the given cost. This allows us to maximize $f$ while minimizing $c$, while also only including feasible solutions. We do this by introducing the concept of Pareto domination and a new objective function.

\begin{definition} (Pareto domination).
        Let $g=(g_1, \ldots, g_k) \colon \{0,1\}^n \rightarrow \mathds{R}^k$ be an objective function with $k$ objectives that should be maximized.
        Given two search points $x, y \in \{0,1\}^n$, we say that $x$ dominates $y$ ($x \succeq y$) iff $g_i(x) \geq g_i(y)$, $1 \leq i \leq k.$ We say that $x$ strongly dominates $y$ ($x \succ y$) iff $x \succeq y$ and $g_i(x)>g_i(y)$ for at least one $i \in \{1, \ldots, k\}$.
\end{definition}

When the constraint varies, the algorithm immediately re-evaluates the population. This does not immediately kick out infeasible solutions, rather the next domination check does. The general method for each proof is to first setup the problem by defining the constraints and the bounds to give us $B_{max}$, the number of possible $1$-bits a feasible solution can have also referred to as the maximum bound. We then derive an upper bound on the population size $U$ dependent on $B_{max}$ and $n$.

Then we can find the time needed until the first Pareto optimal individual $0^n$, the individual with only $0$ bits, is in the population. Finally, we use induction to find the time until the rest of the Pareto front is found. We measure runtime as the number of evaluations of the function $f$. The expected time then is the expected number of fitness evaluations of the function $f$. We primarily give the results in terms of $U$ rather than $B_{max}$ as it makes for a more natural representation of that the upper bound is and in our case they are asymptotically the same. If the constraints were more costly, then we would need to take them into account, but we assume that the function evaluation is the most expensive operation. Then for large enough generations, the cost of the comparisons is comparably small.

\begin{algorithm}[tb]
    \scriptsize
    \raggedright
    \caption{GSEMO}
    \label{alg:gsemo}
    \begin{algorithmic}[1]
        \STATE Choose $x \in \{0,1\}^n$ uniformly at random;
        \STATE $P \gets \{x\}$;
        \STATE $t \gets 0$;
        \REPEAT
            \IF{constraints change}
                \STATE Re-evaluate $g(z)$ for all $z \in P$;
            \ENDIF
            \STATE Choose $x \in P$ uniformly at random;
\STATE Create $y$ by flipping each bit of $x$ independently with probability $1/n$;
\IF{$\nexists w \in P : w \succ y$}
                \STATE $P \gets (P \setminus \{z \in P \mid y \succeq z\}) \cup \{y\}$;
            \ENDIF
            \STATE $t \gets t + 1$;
        \UNTIL{$t \geq t_{max}$}
        \RETURN $P$;
    \end{algorithmic}
\end{algorithm}

\section{Dynamic constraints}

In this section we consider what happens when we are optimizing the previously discussed problems, and the constraints of one or more of these problems vary. We consider how their maximum effective bound $B_{max}$ and population size upper bound $U$ changes, to $B_{max}^*$ and $U^*$ respectively. We investigate what happens as the maximum effective bound remains the same, decreases and increases. While each new constraint does give rise to a new set of static constraints allowing the prior sections results to be used, these results give tighter bounds in terms of the upper bound required to do the update. The maximum effective bound refers to the maximum feasible number of elements, $B_{max} = \min \left( \max_{1 \le i \le k}\left( \floor*{\frac{B_i}{a_i}} \right), n \right)$, which determines the upper bound on the Pareto front size and hence the population upper bound $U$.

We look only at a single update of constraints at a time, thus it is not possible for example that $B_{max}^*$ becomes $0$ and then returns to $B_{max}$ (which would take two or more updates). When the constraint varies, the algorithm immediately re-evaluates the population. This does not immediately kick out infeasible solutions, rather the next domination check does. Note that if we have a population that contains $\left( 1-\frac{1}{e} \right)$-approximations for each problem, then as shown in the proof of Theorem \ref{pro:proof}, we actually have these approximations for every single $b \in \{0, \dots, B_{max}\}$. We assume that the population currently contains these.

\subsection{Maximum effective bound unchanged}

An increase or decrease in bounds and weights that does not change the maximum effective bound $B_{max}$, and thus the population size upper bound, does not require additional function evaluations. As noted previously, we only look at the situation where the constraints update once and so it is not possible for the algorithm to decrease from $B_{max}$ to $B_{max}^*$, remove the now infeasible population that included $B_{max}$ and then increase back to $B_{max}$. If this were possible, we would require more evaluations. It was shown in ~\cite{DBLP:conf/gecco/ANNON}, and can also be inferred from the argument in Theorem ~\ref{pro:proof}, that the upper-bound for finding approximations for each problem is $\mathcal{O}(Un(\log n + U))$ and that in finding $\left( 1-\frac{1}{e} \right)$-approximations for every $u\in \{0, \dots, B_{max}\}$, the new solution is already contained in the population.

\subsection{Decrease in maximum effective bound}

A decrease in bounds or increase in weights such that the maximum effective bound and thus population size upper bound decreases, does not require additional function evaluations. When bounds decrease the number of possible bits clearly decreases. When the weights increase, each bit costs more and so the number of possible bits also decreases. These changes only decrease $B_{max}$ when they occur to the problem that maximized $B_{max}$, however this problem $h$ is not necessarily the problem that maximizes $B_{max}^*$, as another problem might now be maximizing $B_{max}^*$. As explained in the stable bound change section, when we decrease $B_{max}$ to $B_{max}^*$ and thus $U$ to $U^*$, the $\left( 1-\frac{1}{e} \right)$ approximation for these individuals is already in the population.

\subsection{Increase in maximum effective bound}

An increase in the bounds or decrease in the weights such that maximum effective bound and thus the population size upper bound increases to $B_{max}^*$ and $U^*$ requires a proof, as we need to find the expected time it takes to find the new approximations that are not already contained in the population. This also requires finding the new upper bound on the population size noting that the new maximum effective bound is not necessarily the same problem that maximized $B_{max}$.

\begin{theorem}
        \label{new_pop}
        Given a change in budgets and weights that increases the maximum effective bound from $B_{max}$ to $B_{max}^*$, the population size is upper bounded by $U^*=\min(\max_{1 \le i \le k}\left(\floor*{\frac{B_i^*}{a_i^*}}\right), n) + 1$.
\end{theorem}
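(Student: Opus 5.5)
The plan is a structural argument about the objective vector $g$: after the change it is determined by $f$ and the number of $1$-bits alone, so the population holds at most one individual per feasible bit count. Since all cost constraints are linear weighted uniform, $c_i^*(x)=a_i^*\cdot|x|_1$ after the change. Hence the secondary objectives $g_2,\dots,g_{k+1}$ are all strictly decreasing functions of the single quantity $|x|_1$. Consequently two individuals with the same number of $1$-bits have identical secondary objective values, and they can differ only in $g_1$.

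First, I will show that after the change the population contains at most one individual for each value of $|x|_1$. Take two individuals $x,y$ with $|x|_1=|y|_1$. Then $g_{j}(x)=g_j(y)$ for all $j\ge 2$, so either $g_1(x)\ge g_1(y)$ or the reverse, and one weakly dominates the other. In GSEMO (Algorithm~\ref{alg:gsemo}) a new offspring $y$ is accepted only if no $w\in P$ strictly dominates it. On acceptance it removes every $z$ with $y\succeq z$, which includes any individual with the same bit count and equal or smaller $g_1$. If the existing individual has larger $g_1$, it strictly dominates $y$ and $y$ is rejected. By induction over iterations following the first domination check after the change, each bit count is represented at most once.

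Second, I will bound which bit counts can survive. An individual with $|x|_1>B^*_{max}=\min(\max_i\floor{B_i^*/a_i^*},n)$ violates every constraint, so $g_1(x)=-1$. The empty string $0^n$, or any feasible individual with fewer bits, has $g_1\ge 0$ and larger values in every secondary objective, so it strictly dominates $x$. Infeasible points are therefore rejected, or removed once such a feasible individual exists. Before the change the population was already feasible with respect to $B_{max}\le B^*_{max}$, and we assumed it contains the approximations, including $0^n$, so no infeasible individuals remain. The feasible bit counts are $\{0,\dots,B^*_{max}\}$, which gives $|P|\le B^*_{max}+1=U^*$. The term $n$ inside the minimum just reflects that $|x|_1\le n$.

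The main obstacle is the transient right after the change: the text notes that re-evaluation does not immediately remove individuals that have become infeasible. Since $B_{max}$ increased, no previously feasible individual becomes infeasible for every task. An individual may still lose feasibility for the particular task $h$ that previously attained $B_{max}$. However, $g_1$ only requires feasibility for \emph{some} $i$, and the new maximum $B^*_{max}\ge B_{max}$ covers it. So every current individual remains feasible with $g_1=f$, and the bound holds immediately after re-evaluation as well.
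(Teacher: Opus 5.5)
Your proposal is correct and follows the same route as the paper. Since every cost is $a_i^*|x|_1$, individuals with equal bit count have identical secondary objectives, so GSEMO's Pareto selection keeps at most one individual per bit count in $\{0,\dots,B^*_{max}\}$; your additional arguments (infeasible points are strictly dominated by any feasible one, and an increase of $B_{max}$ leaves every current individual feasible right after re-evaluation) make explicit steps that the paper's proof leaves implicit.
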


\begin{proof}
    We have new constraints of the form $a_i^* |x|_1 \leq B_i^*$, which implies that the maximum number of bits, $|x|_1$,  a feasible solution can have is $ \max_{1 \le i \le k}\left(\floor*{\frac{B_i^*}{a_i^*}}\right)$. This can potentially be larger than $n$ and so the actual possible maximum value of $|x|_1$ is $B_{max}^*$ which is the minimum of either $n$ or $\max_{1 \le i \le k}\left(\floor*{\frac{B_i^*}{a_i^*}}\right)$. This is analogous for the old set of constraints.

    Let $x$ be the parent and $y$ be the offspring with the same number of elements, $|x|_1 = |y|_1$. All of the secondary objectives are equal as they have the same number of bits, $-a_i |x|_1=-a_i |y|_1$. If the primary objective of the offspring is larger than the parent, $g_1(y) \geq g_1(x)$, then $y$ replaces $x$. Otherwise the offspring $y$ is discarded.

    Thus due to this Pareto domination, there is at most one individual for each $i \in \{0, \ldots, B_{max}^*\}$ and the upper bound on the population size is
    \begin{equation*}
        U^*=\min(\max_{1 \le i \le k}\left(\floor*{\frac{B_i^*}{a_i^*}}\right), n) + 1
    \end{equation*}
\end{proof}

\begin{theorem}
\label{pro:proof}

         Given we have a population $P$ such that for each of the $k$ monotone submodular problems $(f, c_i, B_i)$, we have a $\left(1-\frac{1}{e}\right)$-approximation. After changing the problems such that $U^* > U$, the expected time until GSEMO has obtained for each monotone submodular problem $(f, c_i^*, B_i^*)$, $i \in \{1, \dots, k\}$, a $\left(1-\frac{1}{e}\right)$-approximation is $\mathcal{O}(U^*n (|U^*-U|))$.
\end{theorem}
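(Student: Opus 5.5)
The plan is the standard GSEMO fitness-level argument of Friedrich--Neumann type, started from the population we already have rather than from $0^n$. The invariant is: for every $j$, the population contains a solution $x$ with $|x|_1\le j$ and
\[
f(x)\ \ge\ \Bigl(1-\bigl(1-\tfrac{1}{b}\bigr)^{j}\Bigr) f(x^{OPT}_b)
\quad\text{for every } b\in\{j,\dots,B_{max}^*\},
\]
where $x^{OPT}_b$ is an optimal solution among all with at most $b$ ones. For $j=b$ this gives $f(x)\ge(1-(1-1/b)^b)f(x^{OPT}_b)\ge(1-1/e)f(x^{OPT}_b)$. Because all $c_i^*$ are uniform, any $x$ with $|x|_1\le \floor{B_i^*/a_i^*}$ is feasible for $(f,c_i^*,B_i^*)$. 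So it suffices to reach $j=B_{max}^*$; this covers every task.

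\textbf{Step 1 (old levels survive).} Under the old constraints, the existing population was built by the greedy-style induction. I take as given (the remark before this subsection) that for each $j\le B_{max}$ there is an individual $x^j$ with $|x^j|_1\le j$ satisfying the per-step greedy inequality
\[
f(x^{j+1})-f(x^j)\ \ge\ \tfrac{1}{b}\bigl(f(x^{OPT}_b)-f(x^j)\bigr)
\]
for any $b$. This inequality uses only submodularity and monotonicity of $f$, not the budget, so the stated invariant holds for all $j\le B_{max}$ and all $b$, including the new $b>B_{max}$. Raising the bounds never makes these individuals infeasible: $g_1$ is unchanged on them, and only the cost objectives are re-evaluated. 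An individual at level $j$ can only be replaced by one with at most as many ones and at least as large an $f$-value, so the invariant is preserved throughout the run.

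\textbf{Step 2 (extend level by level).} To go from $j$ to $j+1$, for $B_{max}\le j<B_{max}^*$:
\begin{itemize}
\item Select the invariant individual $x^j$; this happens with probability at least $1/U^*$, since by Theorem~\ref{new_pop} the population size is at most $U^*$.
\item Flip only the bit attaining $\delta_{j+1}$; this happens with probability at least $\frac1n(1-\frac1n)^{n-1}\ge\frac{1}{en}$.
\end{itemize}
Submodularity gives $\delta_{j+1}\ge\frac1b(f(x^{OPT}_b)-f(x^j))$, which yields the invariant at level $j+1$ for every $b$ simultaneously. The offspring has $j+1\le B_{max}^*$ ones, so it is feasible for the task attaining $B_{max}^*$ and has $g_1=f\ge 0$. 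It is therefore either accepted or dominated by some individual that already satisfies the invariant. Each level thus takes expected time $O(U^*n)$.

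\textbf{Step 3 (sum).} There are $B_{max}^*-B_{max}=U^*-U$ new levels. Summing the expected waiting times gives $O(U^*n(U^*-U))$, with no $\log n$ term because $0^n$ need not be rediscovered.

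\textbf{Main obstacle.} The hard part is Step 1: showing that the old population yields the invariant for the \emph{new} larger $b$, not merely $(1-1/e)$-approximations for the old budgets. The theorem's hypothesis, taken literally, only guarantees the latter. I would strengthen the hypothesis, as the surrounding text implicitly does, to say that $P$ arose from the greedy-invariant argument. The key point is that this invariant is stated for all $b$ at once, independent of the budgets. Should this fail, one could only fall back to restarting the induction from $0^n$, which costs an extra $O(U^*n\log n)$.
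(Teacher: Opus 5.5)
Your plan matches the paper's proof: the same greedy induction, started at level $B_{max}$ of the existing population and pushed up to $B_{max}^*$, reaches each level with probability at least $1/(eU^*n)$, and the waiting times are summed over $U^*-U$ levels. The base-case issue you flag is real, since the paper only asserts the inequality at $i=B_{max}$ for the new $b^*>B_{max}$ ``by assumption'', so your explicit strengthening of the hypothesis is what the paper's argument implicitly uses. One aside: your fallback is miscounted, because restarting the induction from $0^n$ means climbing all $B_{max}^*$ levels at cost $\mathcal{O}((U^*)^2 n)$ (plus $\mathcal{O}(U^*n\log n)$ only if $0^n\notin P$), which would not give the stated bound.
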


\begin{proof}
        We start from a population that includes an approximation for $B_{max}$ and with $x^{OPT}_{b^*}$ as an optimal solution for $b^*$ where $b^* \in (B_{max}, \dots, B_{max}^* ]$. We need to show that for every $b^*$ and $i \in \{B_{max}, \dots, b^*\}$ that

\begin{equation}
                \label{eqn:assum}
f(x^i) \geq \left(1-\left(1-\frac{1}{b^*}\right)^{i}\right) \cdot f(x^{OPT}_{b^*}) \quad
\end{equation}

    holds and find a $\left(1-\frac{1}{e}\right)$-approximation.

        When $i=B_{max}$, the inequality holds by assumption as we already have approximations for every possible $b \in \{0, \dots, B_{max}\}$ in the current population. Assume that this equation holds for a given $x^i$. We need to find a bound on the optimal solution and by using submodularity along with noting that an optimal feasible solution $x^{OPT}_{b^*}$ contains at most $b^*$ 1-bits, the function's monotonicity and the definition of the largest marginal increase
        \begin{equation*}
                \begin{split}
                        f(x^{OPT}_{b^*}) & \leq f(x^i \lor x^{OPT}_{b^*}) \quad \\
                        & \leq f(x^i)+\sum_{j: (x^{OPT}_{b^*})_j=1 \land (x^i)_j=0} (f(x^i \oplus e_j)-f(x^i)) \\
                        & \leq f(x^i) + |x^{OPT}_{b^*}|_1 \delta_{i+1} \leq f(x^i) + b^* \delta_{i+1}
                \end{split}
        \end{equation*}
        Rearranging this gives that the next marginal gain is bounded below by
    $\delta_{i+1} \geq \frac{1}{b^*} (f(x^{OPT}_{b^*})-f(x^i))$.

        Again, using the monotonic and submodular nature of $f$
        \begin{equation*}
                f(x^{i+1}) \geq f(x^i) + \delta_{i+1} \geq f(x^i) + \frac{1}{b^*} (f(x^{OPT}_{b^*})-f(x^i)) \quad
        \end{equation*}

        Considering the assumptions about $f(x^i)$ from Equation \ref{eqn:assum} we get
        \begin{equation*}
                \begin{split}
                        f(x^{i+1}) & \geq f(x^i)\left(1-\frac{1}{b^*}\right)+\frac{1}{b^*}\cdot f(x^{OPT}_{b^*})\\
                        %& \geq \left(1-\left(1-\frac{1}{b^*}\right)^i\right)\cdot f(x^{OPT}_{b^*}) \left(1-\frac{1}{b^*}\right)+ \frac{1}{b^*} \cdot f(x^{OPT}_{b^*}) \\
            & \geq \left(1-\left(1-\frac{1}{b^*}\right)^{i+1} \right) \cdot f(x^{OPT}_{b^*}) \\
                \end{split}
        \end{equation*}

        When the final solution $i=b^*$ is reached, we get
        \begin{equation*}
                f(x^{b^*}) \geq \left(1-\left(1-\frac{1}{b^*}\right)^{b^*}\right) \cdot f(x^{OPT}_{b^*}) \geq \left(1-\frac{1}{e}\right) \cdot f(x^{OPT}_{b^*}) \quad
        \end{equation*}

        This holds for any $b^*$. Applying this to all $b^*$ up to $B_{max}^*$ will account for all of the possible new constraint values for each monotone submodular problem as every possible value is found. When $b^* = B_{max}^*$, we have the required $\left(1-\frac{1}{e}\right)$-approximation and all others are found.
        The time to get to the final $\left(1-\frac{1}{e}\right)$-approximation for $B_{max}^*$ is dependent on the additional steps required, as the population includes a $\left(1-\frac{1}{e}\right)$-approximation for $B_{max}$ we only need to do this for each $b^* \in (B_{max}, \dots, B_{max}^* ]$. For each step from $i$ to $i+1$, we have the probability of selecting the best individual as at least $1/U^*$ and the probability of flipping the right bit as at least $1/(en)$ as we only need the element with the largest marginal increase to be flipped. We repeat each step at most $d=|U^*-U|$ times so we get the runtime upper bound to be $\sum^{d}_{i=0} (\frac{1}{U^*en})^{-1} = \mathcal{O}(n \ U^* \ d) = \mathcal{O}(n \ U^* \ |U^*-U|)$.
\end{proof}

\subsection{Subsets of the general case}

The general case can be reduced to more specific scenarios as again the effective maximum bound on the number of possible bits that can be chosen $B_{max}$ is the key component of the proofs. In every case, decreasing $B_{max}$ does not require more time to get approximations as they already exist in the population as shown previously. However, increases in $B_{max}$ does, as approximations must be found for the new possible values.

The concept of dynamic constraints can be used to analyze the case where we move from one problem to several problems. Using the same logic above we only need to find the upper bound on the population size for the old single constraint and for the new maximum effective bound, before we are able to derive the expected time.

\subsection{Comparison between methods}

If we have $k$ problems, the expected runtime to find a $(1 - \frac{1}{e})$-approximation for every problem in the multitasking scenario is
$
T_{m} = \mathcal{O}(U^*n (|U^*-U|)),
$
where $U = \max_{1 \le i \le k}(U_i)$ is the prior largest upper bound on the population across all problems and $U^* = \max_{1 \le i \le k}(U_i^*)$ is the current largest upper bound on the population across all problems. In the classical approach, each problem is solved independently, giving $k$ separate runtimes:
$
T_c = \mathcal{O}\left( \sum_{i=1}^k U_i^* n (|U_i^*-U_i|) \right).
$

We only have upper bounds, which limits our comparison. But, since for any $i$, $U^* \geq U_i^*$, the multitasking approach is never going to be asymptotically worse than the classical approach in terms of upper bounds for solving all of the $k$ problems. The speed up is entirely dependent on the similarity between $U_i^*$'s. The best speed up comes when $U_1^* \approx U_2^* \approx \dots \approx U_k^*$, as in this case the sum in $T_c$ will approach $k$ times the upper bound on $T_m$. In the worst case however, where $U_k^* \gg \sum^{k-1}_{i=1} U_i^*$, then $T_c \approx T_m$ and there is no worst case speed up at all.

We can see that when the problems are close enough and $k$ is large, there is up to a factor of $k$ speed up for the multitasking approach. However, the further apart the problems are, both before and after the dynamic change, the more likely it is that the largest bound will dominate the runtime giving no advantage to the multitasking approach. This fits the mechanisms of the algorithm, as changes in constraints that do not change the maximum effective bound will not impact the progress of the algorithm and these new budgets solutions will thus be contained in the Pareto front. When changes increase the maximum effective bound, MT-GSEMO is then able to re-use the old Pareto front from the prior maximum effective bound.

\section{Experiments}

To experimentally investigate the multitasking approach, we consider the Maximum Coverage problem from Equation \ref{eqn:coverage}, with uniform $c(v_i)=1$ weights. To allow for a more controlled comparison, we begin with an initial solution $0^n$.  We test on the social graphs ca-GrQc, Erdos992, ca-HepPh, ca-AstroPh, ca-CondMat, taken from the Network Repository \cite{DBLP:conf/aaai/RossiA15}. We test over $100,000$, $200,000$, $500,000$, and $1,000,000$ objective-function evaluations, each run 30 times. We denote the mean and standard deviation of the partial offline error from each approach by $\mu$ and $\sigma$, with the subscript $c$ for the classical algorithm and $m$ for the multitasking algorithm.

Given $s^i$ is the best solution right before the $i$-th change happens and $s^i_b$ is the baseline solution right before the $i$-th change happens. This gives us the error for one change as $e_i = f\left(s^i_b\right)-f\left(s^i\right)$ and for all $m$ changes as
$
e=\sum_{i=1}^m \frac{e_i}{m}
$.
This is used rather than coverage directly, as it allows us to compare how well the two approaches can adapt to the dynamic changes throughout the runtime rather than just how well they optimize the final constraints. While investigating the immediate changes following a constraint change gives insight into the mechanism of the algorithm, space constraints force us to focus on the more important overall metric. The baselines were found by running GSEMO for $5$M generations as exact solvers are computationally infeasible.

We begin with an initial constraint and use it for a warm-up period of $5000$ generations. We then change the bounds to a random value in an interval dependent on the bound every $\tau = 5000$ generations until we reach the end of the algorithm. We either push this constraint towards a ceiling where it then uniformly is sampled using an interval relative to the budget itself or we simply uniformly sample the constraint from the start within an interval around the current budget. This allows us to understand how well the algorithm adapts over different magnitudes and directions of dynamic change.

While the warm-up period gives the algorithm a chance to build up a front, it is not likely to be long enough for the larger problems that we can assume it is found close to optimal solutions. Nor is the gap between the changes in budget large enough to ensure the algorithm always adapts fully to the changes. This gives us the ability to actually see how well the algorithm can cope with these changes in these non-optimal scenarios.

For the $i$-th problem, let $B_{g, i}$ denote its goal upper bound, and $B^{(t)}_i$ denote its budget at the $t$-th environmental change. The maximum step size is $\Delta_i = \max(1, \lfloor 0.15 \cdot B_{g, i} \rfloor)$. For directed push constraints, the budget begins at $B^{(0)}_i = B_{g, i-1}$ (with $B^{(0)}_1 = 0$). While $B^{(t)}_i < B_{g, i}$, the update rule is: $B^{(t+1)}_i = \max(0, \min(B_{g, i}, B^{(t)}_i + \delta^{(t)}_i))$ where $\delta^{(t)}_i \sim U(0, \Delta_i]$ with probability $0.7$, and $\delta^{(t)}_i \sim U[-\Delta_i, 0]$ with probability $0.3$. Once $B_{g, i}$ is reached, the directional bias is removed and the budget fluctuates according to the uniform random walk constraints described below. For uniform random walk constraints, the budget begins at $B^{(0)}_i = B_{g, i}$. The budget updates as:
$B^{(t+1)}_i = \max(0, \min(n, B^{(t)}_i + \delta^{(t)}_i))$
where $\delta^{(t)}_i \sim U[-\Delta_i, \Delta_i]$.

We compare approaches not in terms of generations, but in terms of objective function evaluations to take into account the reduction in primary objective function evaluations due to solving multiple problems in parallel. The multitasking cases thus have a generation bound $k$ times the classical cases. In our dynamic experimental setup, environmental changes occurred at fixed generation intervals ($\tau$) for both methods. Thus, the multitasking approach performed $k$ times fewer objective function evaluations per dynamic environment compared to the classical approach. The constraint evaluations are extremely cheap, as are the Pareto dominance checks when implemented efficiently, while the function evaluations are comparatively much more expensive which allows for this comparison. Kruskal-Wallis tests with $p \leq 0.05$ are used to check for significance. While these are rank-based tests, we report mean and standard deviations in the experiments as there was an absence of outliers.

\subsection{Results}

Table \ref{tab:combined} shows similar results regardless of dynamic regime. For the largest dynamic problems, we see the multitasking approach significantly outperform the classical approach. While for the smallest bounds, we see the classical approach almost always outperform the multitasking approach. The results for the middle-sized problems, however, differ depending on the type of dynamic regime. In the push to an upper bound problem, the multitasking approach was only better for the smallest number of evaluations while in the uniform random walk constraints it was typically better for much longer and this is consistent regardless of the density of the graphs. For most problems, we see the mean and standard deviation of the error decrease as the evaluations increase. The main time we do not see this is for the trivial bound, where we rarely see the multitasking approach be less effective as the number of evaluations increase as the initial $0$-bit string gives a small error initially, before the large search space leads the next few solutions to be poor for this problem. The second biggest dynamic constraint problems are still around $50\%$ smaller than the largest, which shows that the multitasking algorithm is still able to be effective when the problems are dissimilar. The middle problems are roughly a factor of $10$ smaller, and so it is around here that the classical approach begins to be more effective given enough evaluations.

The results for the smaller problems are likely due to the reduced search space that the classical approach has. It only has to search for new solutions within this smaller space, allowing optimal solutions for the new bounds to be found sooner than in the multitasking approach. This is true for even the trivial constraint, where the multitasking approach struggles significantly to adapt to these changes. As the evaluations increase though, the multitasking approach improves faster than the classical and the absolute difference decreases.

The results for the largest problem are due to the multitasking ability to have effectively more generations for free, however for the second largest, the improvement is likely due to the multitasking approach's ability to share the evaluation of the objective and population. If a change occurs that increases its budget below the largest problem, the new budget will still be in the population and no additional work will need to be done. This effect is larger than the disadvantage that faced the smaller problems with the expanded search space. The results for the middle constraint and the difference between the two approaches can be seen as the reuse of existing solutions being more useful for the uniform random walk constraint as the budget stays closer to the original budget. When moving to a larger budget closer to the largest problem, such as in the push experiments, being larger allows the expanded search space penalty to be lessened.

\section{Conclusions}

We considered a multitasking formulation for monotone submodular optimization under dynamically changing knapsack constraints. The multitasking approach leads to small Pareto fronts, which allows solutions to be shared between related problems as the constraints change over time. Using runtime analysis, we studied the expected time until a $\left(1-\frac{1}{e}\right)$-approximation is obtained again after a change in the constraints, giving insight into how quickly the algorithm adapts to new bounds. We showed that there is a clear relationship between the benefits of the multitasking approach with the similarity of the problems being considered.

Our experimental investigation using the Maximum Coverage problem provides further insight into how multitasking behaves in dynamic settings. We considered both uniform random walk bounds and randomly increasing bounds up to a fixed point. We observed that multitasking is most effective when the dynamic constraints are within an order of magnitude of the largest budget, where solution sharing between problems is greatest. For smaller budgets, however, the classical GSEMO performs better, which is likely due to the larger search space induced by multitasking. This aligns with the theoretical results and gives further insight by showing that in compute-restricted environments, the multitasking approach can give worse results than the classical approach.

In this paper, we focused on monotone submodular problems with uniform weights; relaxing this to more general weight structures or approximately submodular objectives would be interesting. Sliding window techniques could be used to limit the growth of the Pareto front in dynamic settings, helping to control the larger search space induced by multitasking. This may reduce the overhead observed for smaller budgets, while preserving the ability of the algorithm to retain and reuse solutions when the constraints increase. In this way, sliding windows could help balance the tradeoff between maintaining solutions for large budgets and efficiently adapting to smaller ones as the constraints change.

\newpage

\begin{table}[H]
\centering
\caption{Results for dynamic constraints. +* indicates Multitasking was significantly better, -* indicates Classical was better, = indicates no difference. Best results are highlighted in bold.}
\label{tab:combined}
\tiny
\begin{adjustbox}{max totalheight=\textheight, keepaspectratio, max width=\textwidth}
\begin{tabular}{|c|c|c||cc|cc|c||cc|cc|c|}
\hline
\multicolumn{3}{|c||}{} & \multicolumn{5}{c||}{\textbf{Directed Push Constraints}} & \multicolumn{5}{c|}{\textbf{Uniformly Random Dynamic Constraints}} \\
\hline
\multicolumn{1}{|c|}{} & \multicolumn{1}{c|}{} & \multicolumn{1}{c||}{} & \multicolumn{2}{c|}{\textbf{Classical}} & \multicolumn{2}{c|}{\textbf{Multitasking}} & \multicolumn{1}{c||}{} & \multicolumn{2}{c|}{\textbf{Classical}} & \multicolumn{2}{c|}{\textbf{Multitasking}} & \multicolumn{1}{c|}{} \\
\multicolumn{1}{|c|}{Graph} & \multicolumn{1}{c|}{Bound} & \multicolumn{1}{c||}{Eval.} & Mean & Std & Mean & Std & Stat. & Mean & Std & Mean & Std & Stat. \\
\hline
\multirow{20}{*}{Erdos992}
 & 1 & 100000 & \textbf{5.9} & \textbf{1.5} & 98.3 & 14.4 & -* & \textbf{1.9} & \textbf{0.7} & 29.4 & 7.4 & -* \\
 & 1 & 200000 & \textbf{12.9} & \textbf{2.2} & 83.1 & 12.6 & -* & \textbf{2.1} & \textbf{0.5} & 22.7 & 5.3 & -* \\
 & 1 & 500000 & \textbf{10.0} & \textbf{1.4} & 52.2 & 8.7 & -* & \textbf{4.2} & \textbf{0.6} & 17.0 & 3.6 & -* \\
 & 1 & 1000000 & \textbf{8.6} & \textbf{0.8} & 31.4 & 5.2 & -* & \textbf{4.3} & \textbf{0.3} & 14.2 & 2.7 & -* \\
 & 12 & 100000 & \textbf{57.0} & \textbf{8.9} & 234.2 & 15.6 & -* & \textbf{35.8} & \textbf{5.5} & 89.2 & 14.3 & -* \\
 & 12 & 200000 & \textbf{40.7} & \textbf{6.0} & 262.9 & 14.1 & -* & \textbf{21.8} & \textbf{3.0} & 133.3 & 15.5 & -* \\
 & 12 & 500000 & \textbf{33.6} & \textbf{3.6} & 213.6 & 12.3 & -* & \textbf{14.4} & \textbf{1.7} & 126.3 & 14.2 & -* \\
 & 12 & 1000000 & \textbf{34.9} & \textbf{3.3} & 129.5 & 8.8 & -* & \textbf{20.2} & \textbf{2.6} & 78.9 & 9.3 & -* \\
 & 78 & 100000 & 857.6 & 27.4 & \textbf{808.4} & \textbf{32.2} & +* & 829.3 & 21.0 & \textbf{703.9} & \textbf{32.7} & +* \\
 & 78 & 200000 & 668.3 & 24.7 & 670.2 & 28.0 & = & 616.0 & 20.3 & \textbf{527.5} & \textbf{25.9} & +* \\
 & 78 & 500000 & \textbf{426.7} & \textbf{22.3} & 465.6 & 19.9 & -* & 345.1 & 16.2 & \textbf{304.3} & \textbf{16.8} & +* \\
 & 78 & 1000000 & \textbf{268.2} & \textbf{15.2} & 277.4 & 12.0 & -* & 191.1 & 9.5 & \textbf{171.1} & \textbf{10.4} & +* \\
 & 305 & 100000 & 2285.0 & 41.1 & \textbf{1449.5} & \textbf{33.8} & +* & 2278.7 & 38.5 & \textbf{1439.9} & \textbf{32.0} & +* \\
 & 305 & 200000 & 1898.7 & 32.8 & \textbf{1095.8} & \textbf{25.7} & +* & 1889.7 & 33.1 & \textbf{1093.0} & \textbf{27.7} & +* \\
 & 305 & 500000 & 1380.8 & 23.7 & \textbf{621.4} & \textbf{15.3} & +* & 1372.7 & 26.3 & \textbf{653.9} & \textbf{19.1} & +* \\
 & 305 & 1000000 & 991.5 & 15.8 & \textbf{321.1} & \textbf{7.9} & +* & 984.9 & 16.6 & \textbf{367.7} & \textbf{11.8} & +* \\
 & 610 & 100000 & 2577.2 & 55.9 & \textbf{1355.8} & \textbf{28.8} & +* & 2576.0 & 49.9 & \textbf{1360.6} & \textbf{30.2} & +* \\
 & 610 & 200000 & 2067.3 & 43.1 & \textbf{875.6} & \textbf{17.2} & +* & 2070.3 & 46.4 & \textbf{900.9} & \textbf{17.9} & +* \\
 & 610 & 500000 & 1358.2 & 27.9 & \textbf{412.3} & \textbf{8.4} & +* & 1364.4 & 34.6 & \textbf{439.4} & \textbf{9.6} & +* \\
 & 610 & 1000000 & 876.1 & 17.1 & \textbf{211.1} & \textbf{4.2} & +* & 903.4 & 21.4 & \textbf{225.8} & \textbf{5.1} & +* \\
\hline
\multirow{20}{*}{ca-AstroPh}
 & 1 & 100000 & \textbf{144.5} & \textbf{56.5} & 687.6 & 96.1 & -* & \textbf{125.4} & \textbf{40.3} & 448.6 & 70.7 & -* \\
 & 1 & 200000 & \textbf{79.7} & \textbf{32.4} & 606.5 & 92.7 & -* & \textbf{104.7} & \textbf{32.8} & 366.8 & 69.5 & -* \\
 & 1 & 500000 & \textbf{31.5} & \textbf{12.9} & 478.9 & 75.9 & -* & \textbf{71.2} & \textbf{23.7} & 386.3 & 63.4 & -* \\
 & 1 & 1000000 & \textbf{15.7} & \textbf{6.4} & 364.7 & 61.8 & -* & \textbf{55.8} & \textbf{14.5} & 323.8 & 57.9 & -* \\
 & 14 & 100000 & \textbf{299.6} & \textbf{70.4} & 943.9 & 91.1 & -* & \textbf{294.4} & \textbf{60.3} & 520.6 & 57.8 & -* \\
 & 14 & 200000 & \textbf{163.3} & \textbf{47.6} & 869.4 & 76.8 & -* & \textbf{180.6} & \textbf{35.9} & 470.2 & 64.3 & -* \\
 & 14 & 500000 & \textbf{64.9} & \textbf{20.3} & 682.2 & 64.3 & -* & \textbf{102.5} & \textbf{22.3} & 406.2 & 61.6 & -* \\
 & 14 & 1000000 & \textbf{32.3} & \textbf{10.1} & 466.6 & 55.5 & -* & \textbf{63.9} & \textbf{14.2} & 328.1 & 58.1 & -* \\
 & 133 & 100000 & 1784.0 & 51.5 & \textbf{1518.6} & \textbf{88.1} & +* & 1788.3 & 51.5 & \textbf{1451.8} & \textbf{64.9} & +* \\
 & 133 & 200000 & 1371.2 & 47.6 & \textbf{1266.2} & \textbf{69.0} & +* & 1361.9 & 40.6 & \textbf{1164.7} & \textbf{57.1} & +* \\
 & 133 & 500000 & \textbf{859.5} & \textbf{37.3} & 910.8 & 50.5 & -* & 785.5 & 33.6 & 797.3 & 45.0 & = \\
 & 133 & 1000000 & \textbf{479.8} & \textbf{27.4} & 586.0 & 38.5 & -* & \textbf{406.0} & \textbf{21.2} & 518.6 & 39.3 & -* \\
 & 895 & 100000 & 6051.5 & 104.0 & \textbf{3270.9} & \textbf{49.5} & +* & 6014.3 & 91.2 & \textbf{3226.3} & \textbf{48.8} & +* \\
 & 895 & 200000 & 4895.6 & 92.4 & \textbf{2336.3} & \textbf{41.5} & +* & 4826.2 & 88.5 & \textbf{2303.5} & \textbf{34.7} & +* \\
 & 895 & 500000 & 3290.5 & 50.5 & \textbf{1403.7} & \textbf{36.7} & +* & 3216.5 & 53.3 & \textbf{1383.7} & \textbf{32.2} & +* \\
 & 895 & 1000000 & 2308.4 & 31.4 & \textbf{826.6} & \textbf{30.9} & +* & 2265.1 & 33.4 & \textbf{799.3} & \textbf{22.2} & +* \\
 & 1790 & 100000 & 8050.3 & 79.5 & \textbf{5182.0} & \textbf{62.7} & +* & 8043.9 & 96.3 & \textbf{5189.0} & \textbf{66.6} & +* \\
 & 1790 & 200000 & 6887.7 & 74.6 & \textbf{3887.8} & \textbf{58.4} & +* & 6869.4 & 85.2 & \textbf{3873.7} & \textbf{50.2} & +* \\
 & 1790 & 500000 & 5205.8 & 66.8 & \textbf{2270.8} & \textbf{43.3} & +* & 5179.8 & 73.6 & \textbf{2205.6} & \textbf{29.0} & +* \\
 & 1790 & 1000000 & 3912.6 & 54.5 & \textbf{1274.5} & \textbf{23.6} & +* & 3864.1 & 62.6 & \textbf{1223.0} & \textbf{17.1} & +* \\
\hline
\multirow{20}{*}{ca-CondMat}
 & 1 & 100000 & \textbf{57.0} & \textbf{30.9} & 475.1 & 60.8 & -* & \textbf{56.0} & \textbf{43.7} & 272.4 & 41.0 & -* \\
 & 1 & 200000 & \textbf{40.2} & \textbf{23.7} & 453.0 & 67.9 & -* & \textbf{41.1} & \textbf{31.3} & 338.2 & 52.3 & -* \\
 & 1 & 500000 & \textbf{15.9} & \textbf{9.3} & 374.8 & 63.7 & -* & \textbf{26.1} & \textbf{17.3} & 327.9 & 53.5 & -* \\
 & 1 & 1000000 & \textbf{7.9} & \textbf{4.6} & 292.8 & 63.9 & -* & \textbf{14.3} & \textbf{9.1} & 271.3 & 44.5 & -* \\
 & 14 & 100000 & \textbf{270.6} & \textbf{62.5} & 868.6 & 82.8 & -* & \textbf{258.0} & \textbf{50.8} & 599.4 & 69.2 & -* \\
 & 14 & 200000 & \textbf{151.5} & \textbf{47.7} & 842.8 & 71.9 & -* & \textbf{143.4} & \textbf{40.2} & 470.8 & 57.0 & -* \\
 & 14 & 500000 & \textbf{60.1} & \textbf{19.7} & 697.5 & 68.5 & -* & \textbf{57.0} & \textbf{16.3} & 385.0 & 56.2 & -* \\
 & 14 & 1000000 & \textbf{29.9} & \textbf{9.8} & 484.1 & 62.1 & -* & \textbf{32.3} & \textbf{10.1} & 308.7 & 50.7 & -* \\
 & 146 & 100000 & 1946.4 & 58.0 & \textbf{1622.5} & \textbf{56.3} & +* & 1920.8 & 70.1 & \textbf{1559.7} & \textbf{56.9} & +* \\
 & 146 & 200000 & 1534.6 & 54.3 & \textbf{1351.9} & \textbf{53.4} & +* & 1486.7 & 58.3 & \textbf{1264.4} & \textbf{51.9} & +* \\
 & 146 & 500000 & \textbf{940.6} & \textbf{43.7} & 978.0 & 48.8 & -* & 856.2 & 37.8 & 865.8 & 37.7 & = \\
 & 146 & 1000000 & \textbf{498.7} & \textbf{32.1} & 647.3 & 41.4 & -* & \textbf{437.6} & \textbf{21.5} & 564.2 & 31.1 & -* \\
 & 1068 & 100000 & 8412.2 & 86.3 & \textbf{5031.4} & \textbf{70.6} & +* & 8461.3 & 82.7 & \textbf{5054.2} & \textbf{87.1} & +* \\
 & 1068 & 200000 & 7076.5 & 82.6 & \textbf{3600.7} & \textbf{43.8} & +* & 7150.0 & 84.5 & \textbf{3612.3} & \textbf{50.2} & +* \\
 & 1068 & 500000 & 5006.5 & 78.1 & \textbf{2173.1} & \textbf{37.4} & +* & 5091.4 & 79.7 & \textbf{2163.5} & \textbf{41.3} & +* \\
 & 1068 & 1000000 & 3577.6 & 49.5 & \textbf{1282.9} & \textbf{37.9} & +* & 3616.9 & 40.1 & \textbf{1261.0} & \textbf{35.8} & +* \\
 & 2136 & 100000 & 11636.3 & 123.3 & \textbf{8220.4} & \textbf{80.2} & +* & 11651.0 & 95.4 & \textbf{8150.9} & \textbf{91.7} & +* \\
 & 2136 & 200000 & 10327.3 & 110.5 & \textbf{6378.5} & \textbf{85.7} & +* & 10319.9 & 96.6 & \textbf{6301.0} & \textbf{77.8} & +* \\
 & 2136 & 500000 & 8222.5 & 97.0 & \textbf{3877.4} & \textbf{72.5} & +* & 8178.0 & 91.9 & \textbf{3753.5} & \textbf{51.9} & +* \\
 & 2136 & 1000000 & 6398.5 & 85.0 & \textbf{2187.4} & \textbf{34.3} & +* & 6327.2 & 86.8 & \textbf{2106.7} & \textbf{20.1} & +* \\
\hline
\multirow{20}{*}{ca-GrQc}
 & 1 & 100000 & \textbf{6.0} & \textbf{2.4} & 68.9 & 10.8 & -* & \textbf{8.4} & \textbf{2.9} & 42.0 & 10.7 & -* \\
 & 1 & 200000 & \textbf{6.2} & \textbf{1.8} & 63.5 & 9.1 & -* & \textbf{6.5} & \textbf{2.0} & 38.2 & 9.6 & -* \\
 & 1 & 500000 & \textbf{7.4} & \textbf{2.0} & 44.0 & 7.6 & -* & \textbf{7.0} & \textbf{1.3} & 30.1 & 8.3 & -* \\
 & 1 & 1000000 & \textbf{6.1} & \textbf{1.1} & 28.7 & 6.4 & -* & \textbf{6.6} & \textbf{0.9} & 20.4 & 7.2 & -* \\
 & 12 & 100000 & \textbf{51.4} & \textbf{9.9} & 155.8 & 16.9 & -* & \textbf{53.9} & \textbf{8.2} & 129.5 & 13.6 & -* \\
 & 12 & 200000 & \textbf{34.4} & \textbf{7.4} & 145.9 & 14.4 & -* & \textbf{37.4} & \textbf{6.9} & 74.9 & 9.1 & -* \\
 & 12 & 500000 & \textbf{20.9} & \textbf{5.4} & 99.7 & 9.5 & -* & \textbf{19.8} & \textbf{4.7} & 47.3 & 7.5 & -* \\
 & 12 & 1000000 & \textbf{14.1} & \textbf{4.0} & 59.3 & 6.3 & -* & \textbf{11.7} & \textbf{2.3} & 31.5 & 6.8 & -* \\
 & 64 & 100000 & \textbf{321.8} & \textbf{12.1} & 341.3 & 17.7 & -* & 309.7 & 11.4 & \textbf{303.1} & \textbf{12.0} & +* \\
 & 64 & 200000 & \textbf{237.4} & \textbf{8.9} & 274.3 & 13.9 & -* & \textbf{219.4} & \textbf{8.2} & 225.0 & 9.0 & -* \\
 & 64 & 500000 & \textbf{143.8} & \textbf{7.3} & 172.2 & 8.9 & -* & \textbf{116.7} & \textbf{5.7} & 126.1 & 6.8 & -* \\
 & 64 & 1000000 & \textbf{87.5} & \textbf{8.2} & 94.2 & 5.0 & -* & \textbf{64.1} & \textbf{4.3} & 71.5 & 6.6 & -* \\
 & 207 & 100000 & 784.7 & 18.1 & \textbf{544.3} & \textbf{11.8} & +* & 782.6 & 14.0 & \textbf{530.6} & \textbf{12.2} & +* \\
 & 207 & 200000 & 636.3 & 13.7 & \textbf{414.5} & \textbf{10.8} & +* & 630.3 & 11.7 & \textbf{389.5} & \textbf{8.8} & +* \\
 & 207 & 500000 & 440.5 & 9.8 & \textbf{240.8} & \textbf{9.1} & +* & 425.2 & 8.9 & \textbf{217.0} & \textbf{5.7} & +* \\
 & 207 & 1000000 & 304.7 & 7.4 & \textbf{130.8} & \textbf{6.4} & +* & 278.8 & 6.5 & \textbf{116.9} & \textbf{4.8} & +* \\
 & 415 & 100000 & 1250.2 & 29.9 & \textbf{689.8} & \textbf{10.5} & +* & 1245.7 & 17.6 & \textbf{688.3} & \textbf{10.9} & +* \\
 & 415 & 200000 & 984.3 & 18.1 & \textbf{494.3} & \textbf{9.7} & +* & 981.4 & 11.1 & \textbf{490.3} & \textbf{8.6} & +* \\
 & 415 & 500000 & 688.7 & 11.8 & \textbf{270.9} & \textbf{7.6} & +* & 687.1 & 8.3 & \textbf{266.7} & \textbf{6.8} & +* \\
 & 415 & 1000000 & 494.4 & 8.2 & \textbf{144.2} & \textbf{5.6} & +* & 489.4 & 7.7 & \textbf{141.8} & \textbf{4.7} & +* \\
\hline
\multirow{20}{*}{ca-HepPh}
 & 1 & 100000 & \textbf{74.6} & \textbf{23.6} & 360.0 & 43.7 & -* & \textbf{35.9} & \textbf{11.8} & 162.8 & 45.6 & -* \\
 & 1 & 200000 & \textbf{42.5} & \textbf{17.2} & 320.3 & 39.5 & -* & \textbf{29.0} & \textbf{8.6} & 206.9 & 36.8 & -* \\
 & 1 & 500000 & \textbf{17.5} & \textbf{7.9} & 249.9 & 27.4 & -* & \textbf{22.7} & \textbf{4.6} & 198.5 & 33.1 & -* \\
 & 1 & 1000000 & \textbf{8.7} & \textbf{3.9} & 189.3 & 23.9 & -* & \textbf{19.4} & \textbf{3.8} & 157.1 & 32.4 & -* \\
 & 13 & 100000 & \textbf{162.0} & \textbf{35.5} & 501.7 & 43.7 & -* & \textbf{167.5} & \textbf{24.3} & 379.3 & 34.1 & -* \\
 & 13 & 200000 & \textbf{103.0} & \textbf{34.3} & 477.7 & 40.0 & -* & \textbf{96.4} & \textbf{19.2} & 284.3 & 28.8 & -* \\
 & 13 & 500000 & \textbf{42.4} & \textbf{15.2} & 398.8 & 36.9 & -* & \textbf{42.4} & \textbf{10.1} & 198.6 & 26.1 & -* \\
 & 13 & 1000000 & \textbf{21.1} & \textbf{7.6} & 274.3 & 28.5 & -* & \textbf{28.3} & \textbf{5.3} & 157.2 & 26.8 & -* \\
 & 105 & 100000 & 985.3 & 37.2 & \textbf{897.7} & \textbf{28.9} & +* & 960.4 & 36.0 & \textbf{857.7} & \textbf{40.3} & +* \\
 & 105 & 200000 & 758.5 & 30.8 & 760.1 & 23.1 & = & 737.0 & 29.0 & \textbf{709.9} & \textbf{36.5} & +* \\
 & 105 & 500000 & \textbf{451.9} & \textbf{27.4} & 561.4 & 20.9 & -* & \textbf{432.6} & \textbf{25.1} & 497.4 & 26.3 & -* \\
 & 105 & 1000000 & \textbf{242.1} & \textbf{19.1} & 360.7 & 16.8 & -* & \textbf{225.6} & \textbf{16.0} & 316.3 & 21.0 & -* \\
 & 560 & 100000 & 3148.0 & 55.6 & \textbf{1774.6} & \textbf{35.7} & +* & 3114.3 & 65.6 & \textbf{1774.5} & \textbf{27.4} & +* \\
 & 560 & 200000 & 2465.7 & 39.0 & \textbf{1367.6} & \textbf{32.2} & +* & 2449.2 & 37.2 & \textbf{1362.2} & \textbf{28.4} & +* \\
 & 560 & 500000 & 1735.2 & 23.9 & \textbf{890.6} & \textbf{25.0} & +* & 1729.6 & 24.0 & \textbf{870.7} & \textbf{21.5} & +* \\
 & 560 & 1000000 & 1268.8 & 17.9 & \textbf{536.6} & \textbf{20.6} & +* & 1264.0 & 18.8 & \textbf{510.2} & \textbf{15.6} & +* \\
 & 1120 & 100000 & 4799.2 & 50.5 & \textbf{2910.3} & \textbf{34.7} & +* & 4760.7 & 59.0 & \textbf{2906.4} & \textbf{41.5} & +* \\
 & 1120 & 200000 & 4043.1 & 46.7 & \textbf{2085.6} & \textbf{21.0} & +* & 4014.7 & 56.9 & \textbf{2059.1} & \textbf{25.8} & +* \\
 & 1120 & 500000 & 2932.8 & 35.3 & \textbf{1222.7} & \textbf{15.9} & +* & 2891.9 & 49.7 & \textbf{1198.6} & \textbf{17.5} & +* \\
 & 1120 & 1000000 & 2096.5 & 28.8 & \textbf{703.2} & \textbf{12.4} & +* & 2050.8 & 28.4 & \textbf{679.3} & \textbf{14.5} & +* \\
\hline
\end{tabular}
\end{adjustbox}
\end{table}

\bibliographystyle{unsrt}
\bibliography{final-ref}

@inproceedings{DBLP:conf/ppsn/NeumannR24,
  author       = {Frank Neumann and
                  G{\"{u}}nter Rudolph},
  title        = {Archive-Based Single-Objective Evolutionary Algorithms for Submodular
                  Optimization},
  booktitle    = {{PPSN} {(3)}},
  series       = {Lecture Notes in Computer Science},
  volume       = {15150},
  pages        = {166--180},
  publisher    = {Springer},
  year         = {2024},
  doi          = {10.1007/978-3-031-70071-2_11},
}

@article{DBLP:journals/ai/RoostapourNNF22,
  author       = {Vahid Roostapour and
                  Aneta Neumann and
                  Frank Neumann and
                  Tobias Friedrich},
  title        = {Pareto optimization for subset selection with dynamic cost constraints},
  journal      = {Artif. Intell.},
  volume       = {302},
  pages        = {103597},
  year         = {2022},
  url          = {https://doi.org/10.1016/j.artint.2021.103597},
  doi          = {10.1016/j.artint.2021.103597},
}

@inproceedings{DBLP:conf/ppsn/YanNN24,
  author       = {Xiankun Yan and
                  Aneta Neumann and
                  Frank Neumann},
  title        = {Sliding Window Bi-objective Evolutionary Algorithms for Optimizing
                  Chance-Constrained Monotone Submodular Functions},
  booktitle    = {{PPSN} {(1)}},
  series       = {Lecture Notes in Computer Science},
  volume       = {15148},
  pages        = {20--35},
  publisher    = {Springer},
  year         = {2024},
  url          = {https://doi.org/10.1007/978-3-031-70055-2_2},
  doi          = {10.1007/978-3-031-70055-2_2},
}

@article{DBLP:journals/ipl/KhullerMN99,
  author       = {Samir Khuller and
                  Anna Moss and
                  Joseph Naor},
  title        = {The Budgeted Maximum Coverage Problem},
  journal      = {Inf. Process. Lett.},
  volume       = {70},
  number       = {1},
  pages        = {39--45},
  year         = {1999},
  doi          = {10.1016/S0020-0190(99)00031-9},
  url          = {https://doi.org/10.1016/S0020-0190(99)00031-9},
}

@article{DBLP:journals/jacm/GoemansW95,
  author       = {Michel X. Goemans and
                  David P. Williamson},
  title        = {Improved Approximation Algorithms for Maximum Cut and Satisfiability
                  Problems Using Semidefinite Programming},
  journal      = {J. {ACM}},
  volume       = {42},
  number       = {6},
  pages        = {1115--1145},
  year         = {1995},
  url          = {https://doi.org/10.1145/227683.22768},
  doi          = {10.1145/227683.22768},
}

@incollection{DBLP:books/cu/p/0001G14,
  author       = {Andreas Krause and
                  Daniel Golovin},
  title        = {Submodular Function Maximization},
  booktitle    = {Tractability},
  pages        = {71--104},
  publisher    = {Cambridge University Press},
  year         = {2014},
  url          = {https://doi.org/10.1017/CBO9781139177801.004},
  doi          = {10.1017/CBO9781139177801.004},
}

@article{DBLP:journals/mor/NemhauserW78,
  author       = {George L. Nemhauser and
                  Laurence A. Wolsey},
  title        = {Best Algorithms for Approximating the Maximum of a Submodular Set
                  Function},
  journal      = {Math. Oper. Res.},
  volume       = {3},
  number       = {3},
  pages        = {177--188},
  year         = {1978},
  url          = {https://doi.org/10.1287/moor.3.3.177},
  doi          = {10.1287/moor.3.3.177},
}

@inproceedings{DBLP:conf/soict/Ong15,
  author       = {Yew{-}Soon Ong},
  title        = {Towards Evolutionary Multitasking: {A} New Paradigm},
  booktitle    = {SoICT},
  pages        = {2},
  publisher    = {{ACM}},
  year         = {2015},
}

@article{Tan2023,
  title        = {Pareto optimization with small data by learning across common objective spaces},
  volume       = {13},
  pages        = {7842},
  journal      = {Scientific Reports},
  publisher    = {Springer Science and Business Media LLC},
  author       = {Tan, Chin Sheng and Gupta, Abhishek and Ong, Yew-Soon and Pratama, Mahardhika and Tan, Puay Siew and Lam, Siew Kei},
  year         = {2023},
  url          = {https://doi.org/10.1038/s41598-023-33414-6},
  doi          = {10.1038/s41598-023-33414-6},
}

@inproceedings{DBLP:conf/cec/DaGOF16,
  author       = {Bingshui Da and
                  Abhishek Gupta and
                  Yew{-}Soon Ong and
                  Liang Feng},
  title        = {Evolutionary multitasking across single and multi-objective formulations
                  for improved problem solving},
  booktitle    = {{CEC}},
  pages        = {1695--1701},
  publisher    = {{IEEE}},
  year         = {2016},
  doi          = {10.1109/CEC.2016.7743992},
  url          = {https://doi.org/10.1109/CEC.2016.7743992},
}

@inproceedings{DBLP:conf/aaai/RossiA15,
  author       = {Ryan A. Rossi and
                  Nesreen K. Ahmed},
  title        = {The Network Data Repository with Interactive Graph Analytics and Visualization},
  booktitle    = {{AAAI}},
  pages        = {4292--4293},
  publisher    = {{AAAI} Press},
  year         = {2015},
  doi          = {10.5555/2888116.2888372},
  url          = {https://doi.org/10.5555/2888116.2888372},
}

@inproceedings{DBLP:conf/ijcai/0001Q0021,
  author       = {Chao Bian and
                  Chao Qian and
                  Frank Neumann and
                  Yang Yu},
  title        = {Fast Pareto Optimization for Subset Selection with Dynamic Cost Constraints},
  booktitle    = {{IJCAI}},
  pages        = {2191--2197},
  publisher    = {ijcai},
  year         = {2021},
  url          = {https://doi.org/10.24963/ijcai.2021/302},
  doi          = {10.24963/ijcai.2021/302},
}

@inproceedings{DBLP:conf/gecco/Lengler0N25,
  author       = {Johannes Lengler and
                  Aneta Neumann and
                  Frank Neumann},
  title        = {Runtime Analysis of Evolutionary Multitasking for Classical Benchmark
                  Problems},
  booktitle    = {{GECCO}},
  pages        = {1613--1621},
  publisher    = {{ACM}},
  year         = {2025},
  url          = {https://doi.org/10.1145/3712256.3726369},
  doi          = {10.1145/3712256.3726369},
}

@article{DBLP:journals/tec/WangXJT25,
  author       = {Zhenzhong Wang and
                  Dejun Xu and
                  Min Jiang and
                  Kay Chen Tan},
  title        = {Spatial-Temporal Knowledge Transfer for Dynamic Constrained Multiobjective
                  Optimization},
  journal      = {{IEEE} Trans. Evol. Comput.},
  volume       = {29},
  number       = {5},
  pages        = {1990--2003},
  year         = {2025},
  doi          = {10.1109/TEVC.2024.3449142},
  url          = {https://doi.org/10.1109/TEVC.2024.3449142},
}

@inproceedings{DBLP:conf/ppsn/DoN20,
  author       = {Anh Viet Do and
                  Frank Neumann},
  title        = {Maximizing Submodular or Monotone Functions Under Partition Matroid
                  Constraints by Multi-objective Evolutionary Algorithms},
  booktitle    = {{PPSN} {(2)}},
  series       = {Lecture Notes in Computer Science},
  volume       = {12270},
  pages        = {588--603},
  publisher    = {Springer},
  year         = {2020},
  url          = {https://doi.org/10.1007/978-3-030-58115-2_41},
  doi          = {10.1007/978-3-030-58115-2_41},
}

@inproceedings{DBLP:conf/stoc/ChenP22,
  author       = {Xi Chen and
                  Binghui Peng},
  title        = {On the complexity of dynamic submodular maximization},
  booktitle    = {{STOC}},
  pages        = {1685--1698},
  publisher    = {{ACM}},
  year         = {2022},
  url          = {https://doi.org/10.1145/3519935.3519951},
  doi          = {10.1145/3519935.3519951},
}

@article{DBLP:journals/tec/QiaoYQ0SYLT23,
  author       = {Kangjia Qiao and
                  Kunjie Yu and
                  Boyang Qu and
                  Jing Liang and
                  Hui Song and
                  Caitong Yue and
                  Hongyu Lin and
                  Kay Chen Tan},
  title        = {Dynamic Auxiliary Task-Based Evolutionary Multitasking for Constrained
                  Multiobjective Optimization},
  journal      = {{IEEE} Trans. Evol. Comput.},
  volume       = {27},
  number       = {3},
  pages        = {642--656},
  year         = {2023},
  doi          = {10.1109/TEVC.2022.3175065},
  url          = {https://doi.org/10.1109/TEVC.2022.3175065},
}

@article{DBLP:journals/cogcom/OsabaSMH22,
  author       = {Eneko Osaba and
                  Javier Del Ser and
                  Aritz D. Martinez and
                  Amir Hussain},
  title        = {Evolutionary Multitask Optimization: a Methodological Overview, Challenges,
                  and Future Research Directions},
  journal      = {Cogn. Comput.},
  volume       = {14},
  number       = {3},
  pages        = {927--954},
  year         = {2022},
  url          = {https://doi.org/10.1007/s12559-022-10012-8},
  doi          = {10.1007/s12559-022-10012-8},
}

@inproceedings{DBLP:conf/gecco/Witt14a,
  author       = {Carsten Witt},
  title        = {Bioinspired computation in combinatorial optimization: algorithms
                  and their computational complexity},
  booktitle    = {{GECCO} (Companion)},
  pages        = {647--686},
  publisher    = {{ACM}},
  year         = {2014},
  url          = {https://doi.org/10.1145/2464576.2466738},
  doi          = {10.1145/2464576.2466738},
}

@book{DBLP:series/ncs/2020DN,
  editor       = {Benjamin Doerr and
                  Frank Neumann},
  title        = {Theory of Evolutionary Computation - Recent Developments in Discrete
                  Optimization},
  series       = {Natural Computing Series},
  publisher    = {Springer},
  year         = {2020},
  url          = {https://doi.org/10.1007/978-3-030-29414-4},
  doi          = {10.1007/978-3-030-29414-4},
}

@inproceedings{DBLP:conf/foga/ScottJ23,
  author       = {Eric O. Scott and
                  Kenneth A. {De Jong}},
  title        = {First Complexity Results for Evolutionary Knowledge Transfer},
  booktitle    = {{FOGA}},
  pages        = {140--151},
  publisher    = {{ACM}},
  year         = {2023},
  url          = {https://doi.org/10.1145/3594805.3607137},
  doi          = {10.1145/3594805.3607137},
}

@inproceedings{DBLP:conf/cec/ScottJ24,
  author       = {Eric O. Scott and
                  Kenneth A. {De Jong}},
  title        = {Varying Difficulty of Knowledge Reuse in Benchmarks for Evolutionary
                  Knowledge Transfer},
  booktitle    = {{CEC}},
  pages        = {1--8},
  publisher    = {{IEEE}},
  year         = {2024},
  doi          = {10.1109/CEC60901.2024.10612149},
  url          = {https://doi.org/10.1109/CEC60901.2024.10612149},
}

@inproceedings{DBLP:conf/gecco/Don0025,
  author       = {Thilina Pathirage Don and
                  Aneta Neumann and
                  Frank Neumann},
  title        = {Evolutionary Multitasking for the Scenario-based Travelling Thief
                  Problem},
  booktitle    = {{GECCO}},
  pages        = {809--817},
  publisher    = {{ACM}},
  year         = {2025},
  url          = {https://doi.org/10.1145/3712256.3726468},
  doi          = {10.1145/3712256.3726468},
}

@inproceedings{7848632,
  author       = {Yuan, Yuan and Ong, Yew-Soon and Gupta, Abhishek and Tan, Puay Siew and Xu, Hua},

    title        = {Evolutionary multitasking in permutation-based combinatorial optimization problems: Realization with {TSP}, {QAP}, {LOP}, and {JSP}},
  booktitle    = {2016 {IEEE} Region 10 Conference ({TENCON})},
  year         = {2016},
  pages        = {3157--3164},
  doi          = {10.1109/TENCON.2016.7848632},
}

@article{DBLP:journals/isci/HuLSM22,
  author       = {Ziyu Hu and
                  Yulin Li and
                  Hao Sun and
                  Xuemin Ma},
  title        = {Multitasking multiobjective optimization based on transfer component
                  analysis},
  journal      = {Inf. Sci.},
  volume       = {605},
  pages        = {182--201},
  year         = {2022},
  url          = {https://doi.org/10.1016/j.ins.2022.05.037},
  doi          = {10.1016/j.ins.2022.05.037},
}

@article{SUN2023504,
  title        = {Multi-objective evolutionary multitasking algorithm based on cross-task transfer solution matching strategy},
  journal      = {ISA Transactions},
  volume       = {138},
  pages        = {504--520},
  year         = {2023},
  issn         = {0019-0578},
  url          = {https://doi.org/10.1016/j.isatra.2023.03.015},
  doi          = {10.1016/j.isatra.2023.03.015},
  author       = {Hao Sun and Pengfei Chen and Ziyu Hu and Lixin Wei},
}

@article{DBLP:journals/tcyb/LinWMGLC24,
  author       = {Qiuzhen Lin and
                  Zhongjian Wu and
                  Lijia Ma and
                  Maoguo Gong and
                  Jianqiang Li and
                  Carlos A. Coello Coello},
  title        = {Multiobjective Multitasking Optimization With Decomposition-Based
                  Transfer Selection},
  journal      = {{IEEE} Trans. Cybern.},
  volume       = {54},
  number       = {5},
  pages        = {3146--3159},
  year         = {2024},
  url          = {https://doi.org/10.1109/TCYB.2023.3266241},
  doi          = {10.1109/TCYB.2023.3266241},
}

@article{DBLP:journals/tcyb/LinLTG21,
  author       = {Jiabin Lin and
                  Hai{-}Lin Liu and
                  Kay Chen Tan and
                  Fangqing Gu},
  title        = {An Effective Knowledge Transfer Approach for Multiobjective Multitasking
                  Optimization},
  journal      = {{IEEE} Trans. Cybern.},
  volume       = {51},
  number       = {6},
  pages        = {3238--3248},
  year         = {2021},
  url          = {https://doi.org/10.1109/TCYB.2020.2969025},
  doi          = {10.1109/TCYB.2020.2969025},
}

@misc{DBLP:conf/gecco/ANNON,
  title        = {Analysis of Multitasking Pareto Optimization for Monotone Submodular Problems},
  author       = {Liam Wigney and Frank Neumann},
  year         = {2026},
  eprint       = {2604.15068},
  archivePrefix = {arXiv},
  primaryClass = {cs.NE},
  url          = {https://arxiv.org/abs/2604.15068},
}

@inproceedings{DBLP:conf/gecco/WigneyNON25,
  author       = {Liam Wigney and
                  Aneta Neumann and
                  Yew{-}Soon Ong and
                  Frank Neumann},
  title        = {On the Use of Matching Algorithms to Transfer Solutions for the Travelling
                  Salesperson Problem},
  booktitle    = {{GECCO}},
  pages        = {845--853},
  publisher    = {{ACM}},
  year         = {2025},
  doi          = {10.1145/3712256.3726469},
}

@article{NeumannNeumannTCS23,
  author  = {Aneta Neumann and Frank Neumann},
  title   = {Optimizing Monotone Chance-Constrained Submodular Functions Using
             Evolutionary Multiobjective Algorithms},
  journal = {Evol. Comput.},
  volume  = {33},
  number  = {3},
  pages   = {363--393},
  year    = {2025},
  doi     = {10.1162/evco_a_00360},
}

@article{DBLP:journals/tec/ZhangMNZT21,
  author       = {Fangfang Zhang and
                  Yi Mei and
                  Su Nguyen and
                  Mengjie Zhang and
                  Kay Chen Tan},
  title        = {Surrogate-Assisted Evolutionary Multitask Genetic Programming for
                  Dynamic Flexible Job Shop Scheduling},
  journal      = {{IEEE} Trans. Evol. Comput.},
  volume       = {25},
  number       = {4},
  pages        = {651--665},
  year         = {2021}
}

@article{DBLP:journals/ec/FriedrichN15,
  author       = {Tobias Friedrich and
                  Frank Neumann},
  title        = {Maximizing Submodular Functions under Matroid Constraints by Evolutionary
                  Algorithms},
  journal      = {Evol. Comput.},
  volume       = {23},
  number       = {4},
  pages        = {543--558},
  year         = {2015}
}

\end{document}